\documentclass[letterpaper, 10 pt, conference]{ieeeconf}  

\IEEEoverridecommandlockouts                              
\overrideIEEEmargins


\usepackage{bbm}
\usepackage{mathrsfs}
\usepackage{verbatim}
\usepackage{amsmath}
\usepackage{amsfonts}
\usepackage{bm}
\usepackage{graphicx}
\usepackage{float}

\usepackage{amsthm}
\usepackage{xcolor}
\usepackage[colorinlistoftodos]{todonotes}
\usepackage{mathtools}
\usepackage{cite}
\usepackage{colortbl}
\usepackage{subcaption}
\usepackage{hyperref}

\theoremstyle{definition}
\newtheorem{assumption}{Assumption}

\newtheorem{remark}{Remark}
\newtheorem{problem}{Problem}

\theoremstyle{plain}

\newtheorem{theorem}{Theorem}
\newtheorem{lemma}{Lemma}


\title{\LARGE \bf Off-Policy Evaluation for Sequential Persuasion Process\\with Unobserved Confounding}



\author{Nishanth Venkatesh S.$^{1}$, {\itshape{Student Member, IEEE}}, Heeseung Bang$^{2}$, {\itshape{Member, IEEE}},\\and Andreas A. Malikopoulos$^{1,2}$, {\itshape{Senior Member, IEEE}}
	\thanks{This research was supported by in part by NSF under Grants CNS-2401007, CMMI-2348381, IIS-2415478 and in part by MathWorks.}
    \thanks{$^{1}$Department of Systems Engineering, Cornell University, Ithaca, NY 14850 USA.}
    \thanks{$^{2}$School of Civil and Environmental Engineering, Cornell University, Ithaca, NY 14853 USA (email: \texttt{ns942@cornell.edu; h.bang@cornell.edu; amaliko@cornell.edu).}} }

\begin{document}

\maketitle
\thispagestyle{empty}

\begin{abstract}
In this paper, we expand the Bayesian persuasion framework to account for unobserved confounding variables in sender-receiver interactions. While traditional models typically assume that belief updates follow Bayesian principles, real-world scenarios often involve hidden variables that impact the receiver’s belief formation and decision-making. We conceptualize this as a sequential decision-making problem, where the sender and receiver interact over multiple rounds. In each round, the sender communicates with the receiver, who also interacts with the environment. Crucially, the receiver’s belief update is affected by an unobserved confounding variable. By reformulating this scenario as a Partially Observable Markov Decision Process (POMDP), we capture the sender’s incomplete information regarding both the dynamics of the receiver’s beliefs and the unobserved confounder. We prove that finding an optimal observation-based policy in this POMDP is equivalent to solving for an optimal signaling strategy in the original persuasion framework. Furthermore, we demonstrate how this reformulation facilitates the application of proximal learning for off-policy evaluation (OPE) in the persuasion process. This advancement enables the sender to evaluate alternative signaling strategies using only observational data from a behavioral policy, thus eliminating the necessity for costly new experiments.

\end{abstract}

\section{Introduction}
\label{sec:intro}

Strategic information sharing plays a critical role in economic interactions, policy design, and multi-agent systems\cite{Dave2020SocialMedia,Malikopoulos2021,venkatesh2023connected}.
Bayesian persuasion (BP) was first introduced by Kamenica and Gentzkow \cite{kamenica2011bayesian} as a powerful framework for analyzing how a sender can strategically reveal information to influence a receiver's decisions. In the standard setting, a sender commits to an information disclosure policy before observing the state of the world, and the receiver, after observing the sender's message, forms posterior beliefs and takes an action that affects both the sender's and the receiver's utilities.

Despite its theoretical elegance, BP rests on assumptions that may not hold in practical settings.
First, the framework presupposes that the sender possesses complete information about the receiver, including their observation process and all features that influence their decision-making (including utility functions).
Second, it assumes that the receiver has sufficient knowledge about the underlying state space to make Bayesian inferences based on the sender's signaling policy and the shared signal.
Many research efforts have explored various directions to relax these restrictive assumptions \cite{renault2017optimal,farhadi2022dynamic,massicot2025almost} or to resolve computational challenges \cite{sayin2021bayesian}.
For example, Castiglioni et al. \cite{castiglioni2020online} introduced online BP to address scenarios where the sender lacks knowledge of the receiver's utility function, proposing an online learning approach to acquire this information iteratively. Several works have extended the framework to sequential decision-making processes.
Gan et al. \cite{gan2022bayesian} and Wu et al. \cite{wu2022markov} formulated BP within Markov decision processes, where a sender engages in multiple rounds of interaction with different myopic receivers at each timestep.
Building on this foundation, Bacchiochchi et al. \cite{bacchiocchi2024markov} leveraged reinforcement learning techniques for settings where the sender has limited prior knowledge of the environment. In a related vein, Lin et al. \cite{lin2023information} developed the Markov signaling game framework and derived signaling gradients to facilitate reinforcement learning approaches.


Although these efforts provide different approaches to relaxing the assumptions, some limitations remain: they generally assume that all relevant variables affecting the receiver's belief formation are observable to the sender. In practice, unobserved confounding variables often influence how receivers interpret information and make decisions. For instance, a user's response to recommendations may depend on contextual factors unknown to the recommender system, or a policymaker's reaction to economic data may be affected by unobserved political constraints. Such confounding introduces a significant challenge to the design of effective persuasion strategies.

In this paper, we address this gap by extending the Bayesian persuasion framework to account for unobserved confounding in sender-receiver interactions. First, we introduce a formulation that models sequential persuasion with unobserved confounding as a Partially Observable Markov Decision Process (POMDP). This formulation captures the sender's incomplete information about both the receiver's belief dynamics and the unobserved confounder. We prove that finding an optimal observation-based policy in this POMDP is equivalent to solving for an optimal signaling strategy in the original persuasion framework.
Second, we demonstrate how this reformulation enables the application of proximal learning, a causal inference technique  
introduced by Miao et al.\cite{miao2018identifying} and extended further to POMDP settings by \cite{tennenholtz2020off}.
Using proximal learning, we address the issue of unobserved confounding for off-policy evaluation (OPE) in the persuasion process.
This allows the sender to assess alternative signaling strategies using only observational data from a behavioral policy without requiring costly new experiments.
The capability for OPE serves as a foundation for reinforcement learning through policy optimization.
Our approach provides a principled way to design robust persuasion strategies in the presence of confounding, with applications in recommendation systems, strategic communications, and human-AI interaction\cite{faros2023adherence,dave2024airecommend}.




The remainder of this paper is organized as follows. In Section \ref{sec:BP}, we review the standard Bayesian persuasion framework, followed by a visualization of the causal graph of this framework. 
In Section \ref{sec:spp}, we present our extension of the persuasion framework, formulating a sequential decision-making problem for the sender, and highlights the role of hidden confounders.
In Section \ref{sec:pomdp}, we formulate a POMDP, equivalent to the persuasion framework.
In Section \ref{sec:OPE}, we demonstrate how proximal learning can be applied to the POMDP to enable OPE for the persuasion framework. 
In Section \ref{Application}, we present an application that can be modeled using the persuasion framework.
Finally, in Section \ref{sec:conclusion}, we draw concluding remarks and highlight some future directions.

\section{Bayesian Persuasion and Causality} \label{sec:BP}


Bayesian persuasion illustrated in Fig. \ref{fig:BP}, models an interaction between two agents, where a sender communicates using signals with a receiver who interacts with the environment based on this communication.

\begin{figure}[t]
    \centering    \includegraphics[width=0.85\linewidth]{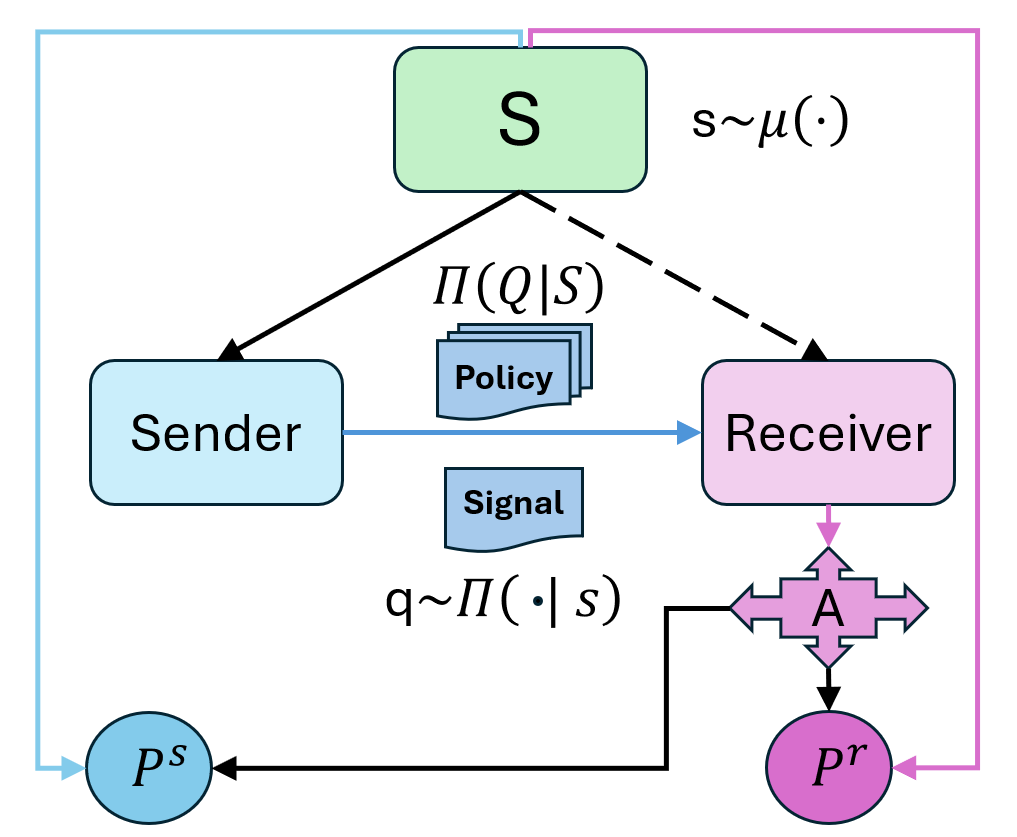}
    \caption{Bayesian persuasion scheme.}
    \label{fig:BP}
        \vspace{-15pt}
\end{figure}
Let the state of the environment be denoted by the random variable $S$, and let any realization $s$ belong to a finite set $\mathcal{S}$. 
The state is sampled from a distribution $\mu \in \Delta(\mathcal{S})$, also known as \textit{prior}.
In BP, both agents know the \textit{prior} $\mu$, while only the sender observes the realized state $s$.
The receiver interacts with the environment by taking actions, denoted by the random variable $A$, with realization $a$ taking values from a finite set $\mathcal{A}$, and this interaction gives a reward to each of the agents.
The sender and receiver obtain rewards $P^s$ and $P^r$ given by the function $\rho^s: \mathcal{S} \times \mathcal{A} \to [m^s,M^s]$ and $\rho^r: \mathcal{S} \times \mathcal{A} \to [m^r,M^r]$, respectively, where $m^s$, $M^s$, $m^r$, $M^r \in \mathbb{R}$ are the lower and upper bounds of their rewards.

In BP, the sender can select a signaling policy $\pi$ from a finite set of policies denoted by $\mathcal{P}$.
The sender then commits this signaling policy, $\pi : \mathcal{S} \to \Delta(\mathcal{Q})$, 
where $\mathcal{Q}$ is a finite space of signals.
During the interaction, the sender samples a signal from signaling policy, i.e., $q \sim \pi(\cdot|s)$, and sends it to the receiver.
Based on the signal, the receiver performs a Bayesian update of their belief about $S$ considering both the prior $\mu$ and the signaling policy $\pi$.
For each $s\in \mathcal{S}$, the resulting posterior distribution is given by
\begin{align}
    p^{\pi}(s|q) = \frac{\pi(q|s)\;\mu(s)}{\sum_{s' \in \mathcal{S}}\pi(q|s')\;\mu(s')},
\end{align}
where $p^{\pi}(s|q)$ is the posterior probability of any realization $s$ under the committed signaling policy $\pi$ and the received signal $q$.
The receiver then picks their optimal action $a^*(q;\pi)$ based on the posterior as follows:
\begin{align}
    a^*(q;\pi)= \arg\max_{a \in \mathcal{A}} \sum_{s\in\mathcal{S}} p^{\pi}(s|q)\; \rho^r(s,a).
\end{align}
Then, the probability of taking action $A$ given the signal $q$ becomes deterministic, that is, $p^{\pi}(a|q) = 1$ if $a=a^*(q;\pi)$ and $p^{\pi}(a|q) = 0$, otherwise.
We define the performance of a signaling policy $\pi$ for the sender as
\vspace{-2pt}
\begin{align}
    J(\pi)=\sum_{s \in \mathcal{S}}\sum_{q \in  \mathcal{Q}}\; \mu(s)\; \pi(q|s)\; \rho^s(s,a^*(q;\pi)).
\end{align}
The goal of the sender is to compute the best signaling policy $\pi^* = \arg\max_{\pi} J(\pi)$.

Next, we discuss the causal graph of the BP framework (see Fig. \ref{fig:causal_graph_BP}).
This graph is made up of nodes and directed edges.
Nodes denote random variables, which include the state, action, and rewards, which are the same quantities in the persuasion process. 
The posterior, which represents the belief of the receiver about the state of the system, is denoted by the variable $B$. 
The policy of the sender is indicated by $\Pi$.
The edges, also known as causal links, indicate the direction of causality.
Since the signal $Q$ is sampled from the sender's signaling policy conditioned on the state, there exists a causal link from $S$ and $\Pi$ to the node $Q$.
Furthermore, the belief update process is influenced by both the policy and the received signal, establishing causal links from these nodes to the belief $B$.
The posterior, in turn, determines the receiver's action $A$, leading to a directed edge from $B$ to $A$.
Finally, by definition, both the state $S$ and the receiver's action $A$, have direct causal links to the reward nodes $P^r$ and $P^s$.


\begin{figure}[t]
    \centering
    \includegraphics[width=0.8\linewidth]{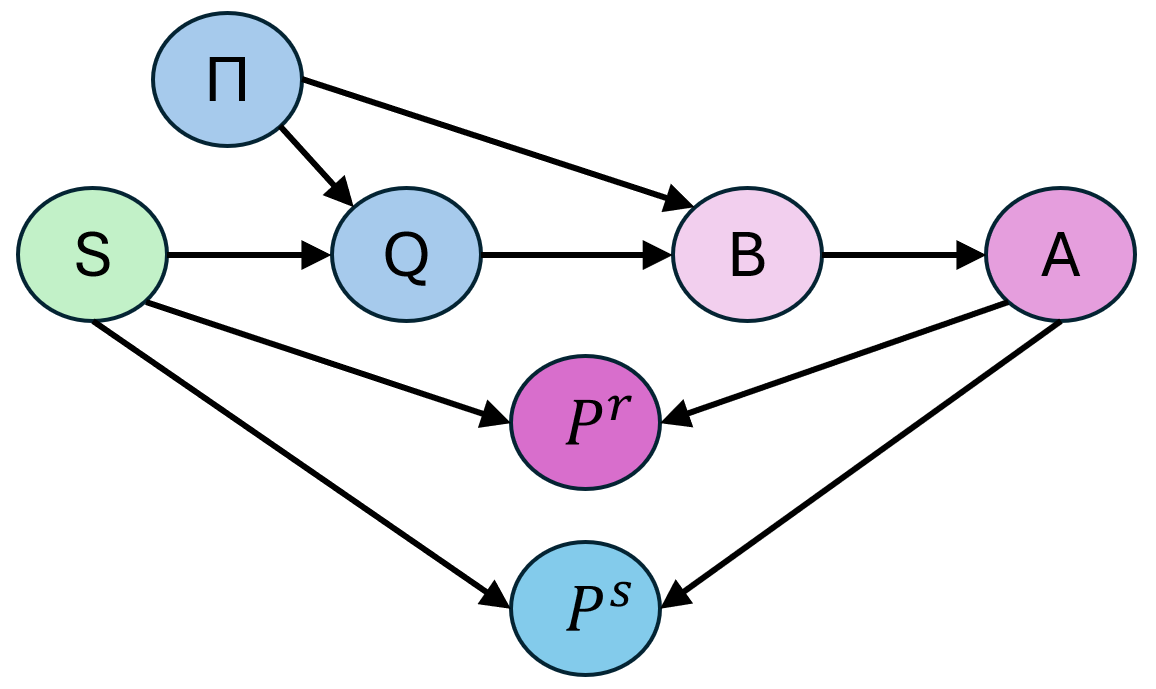}
    \caption{Causal graph - Bayesian persuasion.}
    \label{fig:causal_graph_BP}
        \vspace{-15pt}
\end{figure}

\section{Sequential Persuasion with unobserved confounding} \label{sec:spp}
In our framework of persuasion under unobserved confounding, we extend the standard BP to a sequential setting where the sender communicates with the receiver and the receiver interacts with the environment over several rounds. The index of the round is indicated by $i$ and ranges from $i=0,\ldots,T$, where $T\in\mathbb{N}$ corresponds to the final round of interaction.
The key distinction is the presence of an unobserved confounder that affects how the receiver updates their prior belief after communicating with the sender.

The state of the environment at each round $i$ is denoted by the random variable $S_i$, with realization $s_i \in \mathcal{S}$.
Each state $S_i$ is sampled independently from the same distribution $\mu$. Unlike standard BP, the true distribution $\mu$ is known only to the sender, not to the receiver.
At each $i$, the sender can observe $s_i$ and select a signaling policy $\pi_i$ from a finite set of policies denoted by $\mathcal{P}$ at each $i$.
Then, the sender commits to a signaling policy $\pi_i$, samples signal $q_i \sim \pi_i(\cdot|s_i)$, and shares them with the receiver.
The receiver holds their own belief $B_i$ about state $S_i$, which is unknown to the sender.
We consider that the receiver's belief space is finite. Let this finite space of beliefs be denoted by $\mathscr{B}$. 
The receiver selects a self-interested action:
\begin{align}
    a^*_i= \arg\max_{a \in  \mathcal{A}} \sum_{s \in \mathcal{S}} b_i(s) \rho^r(s,a),
    \label{self_tnterested_action}
\end{align}
where $b_i(s)$ is the belief $b_i$ based probability of a state realization $s$.
After the receiver selects an action, both the sender and the receiver receive rewards $P^s_i = \rho^s(S_i,A_i)$ and $P^r_i = \rho^r(S_i,A_i)$, respectively, which are bounded by $[m^s,M^s]$ and $[m^r,M^r]$. 
Let $\rho^s_i$ and $\rho^r_i$ denote reward realizations for $P^s_i$ and $P^r_i$, respectively.
Since $\mathcal{S}$ and $\mathcal{A}$ are finite sets, we consider that $\rho^s_i$ and $\rho^r_i$ also belong to finite sets $\mathcal{R}^s$ and $\mathcal{R}^r$, respectively.

A key element in our framework is the introduction of an unobserved confounder $Z$. This variable affects how the receiver updates their belief but is observable only to the receiver.
At each $i$, any realization $z$ belongs to a finite set $\mathcal{Z}$. In the beginning, it is sampled from a distribution $z_0 \sim \eta$, and it remains constant $Z_{i+1}=Z_i$.

\begin{remark}
The variable $Z$ represents the nature of the receiver. For instance, consider the conservativeness of a planner within a decision-making framework, which exists on a spectrum from risk-averse (pessimistic) to risk-neutral (optimistic).
\end{remark}


When the receiver obtains information about the signal and the committed policy, the dynamics fo their belief update is given by the conditional distribution $   p(b_i|b_{i-1},\rho^r_{i-1},a_{i-1},q_i,z_i,\pi_i)$,
We note that the belief-update dynamics is unknown to the sender.
Considering that the receiver has no information at the initial round, we assume that they start with a uniform prior; that is, $b_{0}$ is the \textit{uniform distribution} on $\mathcal{S}$.
After the belief update, the receiver selects action $a^*_i$ according to \eqref{self_tnterested_action}.
Since $Z_i$ is unobservable by the sender, and influences the receiver's belief update, in the language of causal inference, $Z_i$ acts as an \textit{unobserved confounder}. An unobserved confounder presents a challenge to the sender in keeping track of or predicting belief updates over the rounds.
At each $i$, we denote all information available to the sender using the information vector, denoted by $\Delta_i$.
Any realization of such information vector is given by a tuple $\delta_i= (s_{0:i-1},\pi_{0:i-1},q_{0:i-1},a_{0:i-1},\rho^r_{0:i-1})$. At each $i$, the set of possible realizations of $\Delta_i$ is denoted by $\mathcal{D}_i$. 

In this framework, we consider that the sender designs a meta-policy $\boldsymbol{\pi}:= (p_0,p_{2},\ldots,p_i)$, where each $p_i : \mathcal{D}_i \times \mathcal{S} \to \mathcal{P}$ selects the signaling policy as $\pi_i=p_i(\delta_i,s_i)$.
For the ease of interpretation, we consider deterministic meta-policies. However, the analysis in this work is applicable even in the stochastic case.
The set of meta-policies is denoted by $\boldsymbol{\Pi}$.

Next, we define the performance of any meta-policy $\boldsymbol{\pi}$ as $J(\boldsymbol{\pi}):= \mathbb{E}^{\boldsymbol{\pi}}[\sum_{i=0}^T \rho^s(S_i,A_i)]$,
where $\mathbb{E}^{\boldsymbol{\pi}}(\cdot)$ is the expectation over the joint distribution induced by the meta-policy $\boldsymbol{\pi}$, the belief update dynamics and the self-interested receiver action selected according to \eqref{self_tnterested_action}.
\vspace{5pt}


\begin{problem} \label{prb:meta-policy}
The goal of the sender is to compute the best signaling meta-policy $\boldsymbol{\pi}^*$, defined as $\boldsymbol{\pi}^* = \arg\max_{\boldsymbol{\pi} \in \boldsymbol{\Pi}} J(\boldsymbol{\pi})$.
\end{problem}

\section{Construction of POMDP Structure} \label{sec:pomdp}

This section demonstrates how the sequential persuasion process (SPP) with unobserved confounding can be formulated as a POMDP.
The sender serves as an agent who makes a decision, the receiver becomes part of an environment, and the sender's incomplete knowledge of the receiver's belief and the unobserved confounder are considered unobserved quantities.
Then, we use this framework to formulate a problem equivalent to Problem \ref{prb:meta-policy}.


We reformulate the SPP as a system that evolves over discrete time steps, denoted by $t=0, \dots, T+1$, where $T$ is the final round of interaction in the persuasion process.
Furthermore, the definition of the state of this system will give intuition behind a horizon of $T+1$ time steps.
In our analysis, we show how the newly formulated system is a valid POMDP.
We use upper-case letters to denote random variables and lower-case letters to denote their corresponding realizations.

At each $t$, let $u_t$ denote the control action in this system.
Starting with $i=t=0$, the state of the system $x_0$ is realized and the observation $y_0$ is available.
The agent then implements a control action $u_0=\pi_0$, which is the signaling policy.
This concludes the first time step, after which the equivalent system evolves to the next time step.
Although the equivalent system evolves, the round $i=0$ in the SPP is not complete.
At the end of the time step $t=0$, only the state $s_0$ and the sender's signal $\pi_0$ have been realized in the SPP.
It is only after the equivalent system evolves to $t=1$ that the receiver in the SPP updates their belief in round $i=0$ and then selects action $a_0$ in the SPP.
This is illustrated in Fig. \ref{fig:reformulation}.
\begin{figure}[t]
    \centering
    \includegraphics[width=\linewidth]{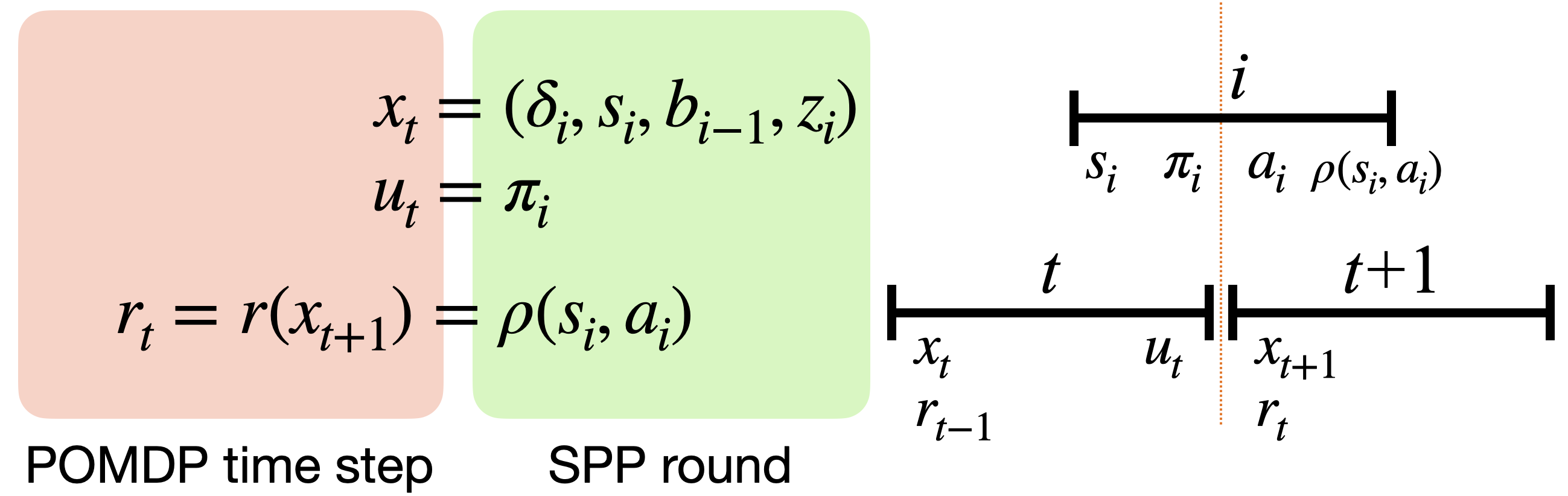}
    \caption{System reformulation: SPP to POMDP.}
    \label{fig:reformulation}
    \vspace{-10pt}
\end{figure}

We define the initial state of the system as $x_0=(b_{-1},s_0,z_0)$, with observation $y_0=(s_0)$.
At each time step $t=1, \dots, T$, the state consists of the tuple $x_t=(\delta_i,s_i,b_{i-1},z_i)$ with the observation $y_t=(\delta_i,s_i)$.
At the final time $t=T+1$, the state is given by $x_{t}=(\delta_{i+1},b_i)$, with observation $y_{t}=(\delta_{i+1})$.
At each $t$, let $\mathcal{X}_t$ be the set of all state realizations.
Note that the number of elements of information vector $\delta_{i}$, which is a component of the state realization $x_t$ and the observation $y_t$, grows over time.
Consequently, for any distinct time steps $t$ and $t'$, the sets $\mathcal{X}_t$ and $\mathcal{X}_{t'}$, as well as $\mathcal{Y}_t$ and $\mathcal{Y}_{t'}$ are disjoint.
We define the state space and the observation space of the system as the disjoint unions by $\mathcal{X}= \cup_{t=0}^{T+1} \mathcal{X}_t$ and $\mathcal{Y}= \cup_{t=0}^{T+1} \mathcal{Y}_t$, respectively. 

\begin{lemma} \label{lem:pomdp}
    The evolution of the state of this system is Markovian. Hence, the system is a POMDP.
\end{lemma}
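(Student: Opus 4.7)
The plan is to verify the two defining properties of a POMDP: (i) the state transition is Markovian given the current state and control action, and (ii) the observation at each time step is a deterministic function of the underlying state. Property (ii) is immediate from the construction, since at every $t$ the observation $y_t$ is defined as a strict subset of the components of $x_t$, so I would dispatch it in one line. The substantive content is therefore to establish (i) for the transition kernel $\mathbb{P}(X_{t+1}\mid X_{0:t}, U_{0:t})$.

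To show Markovianity, I would carry out the computation separately for the three regimes $t=0$, $1 \le t \le T-1$, and $t=T$. For the generic case $1 \le t \le T-1$ with $i=t$, the components of $x_{t+1}=(\delta_{i+1}, s_{i+1}, b_i, z_{i+1})$ can be generated one at a time from $(x_t, u_t)$ as follows: the signal is drawn as $q_i \sim \pi_i(\cdot \mid s_i)$, where $\pi_i = u_t$ and $s_i$ lies in $x_t$; the new belief $b_i \sim p(\cdot \mid b_{i-1}, \rho^r_{i-1}, a_{i-1}, q_i, z_i, \pi_i)$ has all its conditioning variables sourced from $x_t$ (through $\delta_i$, $b_{i-1}$, $z_i$), from $u_t$, and from the freshly drawn $q_i$; the receiver's action $a_i$ is a deterministic function of $b_i$ via \eqref{self_tnterested_action}; the reward $\rho^r_i = \rho^r(s_i, a_i)$ is deterministic; the next environmental state is sampled $s_{i+1} \sim \mu$ independently; $z_{i+1} = z_i$ by assumption; and $\delta_{i+1}$ is assembled deterministically from $\delta_i, s_i, \pi_i, q_i, a_i, \rho^r_i$. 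Since the exogenous random inputs $(q_i, b_i, s_{i+1})$ used in this generation are, by construction, conditionally independent of $X_{0:t-1}$ and $U_{0:t-1}$ given $(x_t, u_t)$, the equality $\mathbb{P}(X_{t+1}\mid X_{0:t}, U_{0:t}) = \mathbb{P}(X_{t+1}\mid X_t, U_t)$ follows.

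The boundary cases require only minor bookkeeping: at $t=0$ the initial belief component of $x_0$ takes the role of the uniform prior and the nonexistent entries $a_{-1}, \rho^r_{-1}$ are dropped from the conditioning of the first belief update; the terminal transition into $x_{T+1}=(\delta_{T+1}, b_T)$ follows the same template but omits $s_{T+1}$ and $z_{T+1}$ since no further round is played. The one point that warrants real care, and which I expect to be the main conceptual obstacle, is reconciling Markovianity with the fact that the growing information vector $\delta_i$ forces the state sets $\mathcal{X}_t$ to be disjoint and of different dimensions across $t$. This is resolved by the standard history-augmentation device: by folding $\delta_i$ into the state itself, all path-dependent information is absorbed, and the transition kernel is well-defined on the disjoint-union state space $\mathcal{X} = \cup_{t=0}^{T+1} \mathcal{X}_t$, with observations extracted from $\mathcal{Y} = \cup_{t=0}^{T+1} \mathcal{Y}_t$. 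Combined with the emission map, this delivers the POMDP structure claimed in the lemma.
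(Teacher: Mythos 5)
Your proposal is correct and follows essentially the same route as the paper's own proof: both arguments factor the transition into its components $(\delta_{i+1}, s_{i+1}, b_i, z_{i+1})$ and check that each is generated from a distribution (or deterministic map) depending only on $(x_t, u_t)$ and on quantities freshly drawn during the transition. Your version is somewhat more careful than the paper's --- you make the sequential generation order explicit (so that $a_i$ depending on the newly drawn $b_i$ is clearly legitimate), and you treat the boundary cases $t=0$ and $t=T$ and the disjoint-union state space, which the paper relegates to a separate remark --- but these are refinements of the same argument, not a different one.
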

\begin{proof}
    The state at time $t+1$ is given by $x_{t+1}=(\delta_{i+1},s_{i+1},b_i,z_{i+1})$. Since $s_{i+1} \sim \mu$, it evolves independently and $z_{i+1}=z_i$ do not depend on the action at time $u_t=\pi_i$. 
    The belief update, given by  $p(b_i|b_{i-1},\rho^r_{i-1},a_{i-1},q_i,z_i,\pi_i)$ depends on $(b_{i-1},\rho^r_{i-1},a_{i-1},q_i,z_i)$, which are all components of the state $x_{t}$ and the action $u_t=\pi_i$. 
    Consider the information vector $\delta_{i+1}=(\delta_{i},s_i,q_i,a_i,\rho^r_i,\pi_i)$.
    We know that $s_i\sim \mu$ is drawn independently.
    The receiver's action $a_i$ is a function of the belief $b_i$, hence it is purely state and action-dependent.  
    As the reward for the receiver $\rho^s_i$ is based on $a_i$ and $s_i$, it is also purely state and action-dependent.
    Given $s_i$, the action $\pi_i$ completely determines the signal as $q_i \sim \pi_i(\cdot|s_i)$.  
    This shows that the state $x_t$ and the action $u_t$ are enough to predict a distribution on the next state $x_{t+1}$. Hence the state evolution is Markovian, that is, $p(x_{t+1}|x_{0:i},u_{0:i})= p(x_{t+1}|x_t,u_t)$. Since the agent cannot observe the full state $x_t$, this system is a POMDP.
\end{proof}

\begin{remark}
    Since state space $\mathcal{X}$ is a disjoint union of sets, the Markov transitions in the POMDP are such that, at each $t$, $p(f_{t+1}|f_t,u_t)=0$ if $f_{t+1} \notin \mathcal{X}_{t+1}$.
\end{remark}


At each $t=0, \ldots, T$, the agent selects the action $u_t$ using a control law $g_t: \
\mathcal{Y} \to \mathcal{P}$ as $u_t = g_t(y_t)$.
At each $t$, any control law such that for any action $u\in  \mathcal{U}$, if $y_t \notin \mathcal{Y}_t$, then $p(u|y_t)=0$, is a valid control law.
The feasible set of control laws at time $t$ is denoted by $\mathcal{G}_t$.
The tuple of control laws denotes the control strategy of the planner $\boldsymbol{g} := (g_0,\dots,g_{n-1})$, where $\boldsymbol{g} \in \mathcal{G}$ and $\mathcal{G} = \prod_{t = 0}^{n-1}\mathcal{G}_{t}$.

\begin{lemma} \label{lem:strategy}
The set of all observation-based control strategies $\mathcal{G}$ in the constructed POMDP is equivalent to $\boldsymbol{\Pi}$ the set of all meta-policies in the persuasion framework.    
\end{lemma}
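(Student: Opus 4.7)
The plan is to exhibit an explicit bijection between $\mathcal{G}$ and $\boldsymbol{\Pi}$ and then verify that corresponding strategies induce the same joint distribution on the observable trajectory. Because the POMDP in Section \ref{sec:pomdp} was constructed precisely so that each time step $t$ mirrors one round $i$ in the SPP, the proof reduces to matching domains, codomains, and the information available to the decision maker at each stage.

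First, I would identify the observation sets with the domains of the meta-policy components. For $t=0$, take $\delta_0$ to be the empty tuple, so that $\mathcal{Y}_0 = \mathcal{S}$ coincides with $\mathcal{D}_0 \times \mathcal{S}$. For $1 \le t \le T$ we have $y_t = (\delta_i, s_i)$ with $i = t$, so $\mathcal{Y}_t = \mathcal{D}_t \times \mathcal{S}$, which is exactly the domain of $p_t$; in both cases the codomain is $\mathcal{P}$. I would then define $\Phi : \boldsymbol{\Pi} \to \mathcal{G}$ by setting $g_t(\delta_t, s_t) := p_t(\delta_t, s_t)$ for $y_t \in \mathcal{Y}_t$, and extending arbitrarily on $\mathcal{Y} \setminus \mathcal{Y}_t$. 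The feasibility constraint $p(u|y_t) = 0$ whenever $y_t \notin \mathcal{Y}_t$ in the definition of $\mathcal{G}_t$ renders this extension irrelevant to any induced law, and the symmetric assignment yields a two-sided inverse, so $\Phi$ is a bijection.

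To upgrade the set-theoretic bijection to a behavioural equivalence, I would show that corresponding strategies produce identical distributions for the shared random variables $(S_{0:T}, Q_{0:T}, A_{0:T}, B_{0:T}, P^r_{0:T}, P^s_{0:T}, Z_{0:T})$. By Lemma \ref{lem:pomdp}, the POMDP state $x_t$ together with the action $u_t$ fully determines the conditional distribution of $x_{t+1}$, and the components of $x_t$ and $u_t$ exhaust precisely the variables that drive the SPP dynamics, namely the next-state draw $s_{i+1}$, the signal $q_i$, the belief update, the receiver action, and the rewards. Hence selecting $u_t = g_t(y_t)$ in the POMDP and $\pi_i = p_i(\delta_i, s_i)$ in the SPP induce the same joint law round by round, so in particular $J(\boldsymbol{\pi}) = J(\Phi(\boldsymbol{\pi}))$.

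The main obstacle I anticipate is careful bookkeeping at the two temporal boundaries: the initial round, where $\delta_0$ must be treated as empty so that $\mathcal{Y}_0 = \mathcal{D}_0 \times \mathcal{S}$, and the terminal step $t = T+1$, at which no control action is taken and which must therefore be excluded from the domains of both $\boldsymbol{g}$ and $\boldsymbol{\pi}$. Once these edge cases are pinned down, the substance of the argument is essentially a pointwise identification of two function spaces with matching signatures, so I expect the proof itself to be short.
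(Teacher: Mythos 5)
Your proposal is correct and follows essentially the same route as the paper's proof: both arguments rest on the pointwise identification of the control law $g_t$ restricted to $\mathcal{Y}_t$ with the meta-policy component $p_i$, since $y_t=(\delta_i,s_i)$ makes their domains and codomain $\mathcal{P}$ coincide at each $t=i$. Your additional care about the extension of $g_t$ off $\mathcal{Y}_t$ (and the boundary cases $t=0$, $t=T+1$) tightens a point the paper glosses over, and your verification that corresponding strategies induce the same law and hence $J(\boldsymbol{\pi})=J(\Phi(\boldsymbol{\pi}))$ is extra content that the paper defers to the proof of Theorem 1 rather than part of this lemma.
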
 

\begin{proof}
We recall that in the persuasion framework, the sender’s meta-policy $\boldsymbol{\pi}$ defines a tuple of functional mappings $p_i: \mathcal{D}_i \times \mathcal{S} \to \mathcal{P}$.
At each $i$, the signaling policy $\pi_i$ is set by evaluation of $p_i$ at $\Delta_i=\delta_i$ as $\pi_i=p_i(s_{0:i-1},\pi_{0:i-1},q_{0:i-1},a_{0:i-1},\rho^r_{0:i-1},q_i)$.
In the case of observation-based control strategies in the POMDP setting, the control law $g_t$ selects the action $u_t=\pi_i$ based on $u_t\;=g_t(y_t)=g_t(s_{0:i-1},\pi_{0:i-1},q_{0:i-1},a_{0:i-1},\rho^r_{0:i-1},q_i)$.
With this, we observe that at each $t=i$, the selected signaling policy is identical in both frameworks. 
Consequently, the space of all sender meta-policies $\boldsymbol{\Pi}$ coincides with the set of all observation-based control strategies $\mathcal{G}$ for the planner in the POMDP framework.

\end{proof}

\begin{remark} \label{remark_POMDP_reward}
    Since the set of all meta-policies $\boldsymbol{\Pi}$ coincides with the set of all observation-based control strategies $\mathcal{G}$, any strategy $\boldsymbol{g} \in \mathcal{G}$ is a valid meta-policy. 
\end{remark}

After the agent selects an action and the system evolves to time $t+1$, the agent receives a reward denoted by $r^s_t$. 
To align the system description with a standard POMDP formulation, we construct the reward function to depend on the future state of the system.
The reward for the agent at time $t$ is a mapping $r_t: \mathcal{X}_{t+1} \to [m^s,M^s]$ that satisfies $r_t(x_{t+1})=\rho^s(s_i,a_i)$, for $i=t$.
Since the reward $\rho^S$ for the persuasion process belongs to a finite set $\mathcal{R}^s$, the reward $r^s_t$ also belongs to the set $\mathcal{R}^s$. 
Since the system evolves only until time $T+1$, we set the reward $r^s_{T+1}$ identically to zero.

In the POMDP setting, we define the performance of any strategy $\boldsymbol{g}$ as $J(\boldsymbol{g})= \mathbb{E}^{\boldsymbol{g}} \left[\sum_{t=0}^T r_t(X_{t+1})\right]$,
where $\mathbb{E}^{\boldsymbol{g}}$ denotes the expectation on all the random variables with respect to the probability distributions generated by the choice of control strategy $\boldsymbol{g}$. 

\begin{problem} \label{prb:pomdp}
The goal of the agent is to compute the best observation-based control strategy $\boldsymbol{g}^*$, defined as $\boldsymbol{g}^* = \arg\max_{\boldsymbol{g} \in \boldsymbol{G}} J(\boldsymbol{g})$.
\end{problem}


\begin{theorem}
There exists a POMDP that is equivalent to the persuasion framework, and the optimal observation-based control strategy for this POMDP corresponds to a solution of Problem \ref{prb:meta-policy}.    
\end{theorem}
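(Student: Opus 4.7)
The plan is to assemble the theorem from the two preceding lemmas and an identification of the two performance functionals. By Lemma \ref{lem:pomdp}, the system constructed in Section \ref{sec:pomdp} with state $x_t$, observation $y_t$, action $u_t$, reward $r_t$, and transition kernel induced by the sampling of $S_i \sim \mu$, the belief-update dynamics $p(b_i \mid b_{i-1}, \rho^r_{i-1}, a_{i-1}, q_i, z_i, \pi_i)$, the constancy of $Z$, and the sender's signal draw $q_i \sim \pi_i(\cdot \mid s_i)$, is a bona fide POMDP. This furnishes the existence of the POMDP claimed in the first half of the statement.

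Next, I would invoke Lemma \ref{lem:strategy} to identify the admissible decision objects on each side. That lemma supplies a bijection $\Phi: \boldsymbol{\Pi} \to \mathcal{G}$ such that, for every round $i=t$, the signaling policy selected by $p_i(\delta_i, s_i)$ coincides with the control action produced by $g_t(y_t)$ on the matching trajectory. So the two optimization domains are the same set, modulo this relabeling.

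The step that requires actual work is showing that the two performance functionals agree under $\Phi$, i.e., $J(\boldsymbol{\pi}) = J(\Phi(\boldsymbol{\pi}))$ for every $\boldsymbol{\pi} \in \boldsymbol{\Pi}$. To do this I would argue that, for any $\boldsymbol{\pi}$ and its image $\boldsymbol{g} = \Phi(\boldsymbol{\pi})$, the joint distribution over $(S_{0:T}, \Pi_{0:T}, Q_{0:T}, B_{0:T}, A_{0:T}, Z)$ induced by the sender's meta-policy together with the belief dynamics and the greedy receiver rule \eqref{self_tnterested_action} coincides with the push-forward onto these coordinates of the joint law over $(X_{0:T+1}, U_{0:T})$ produced by $\boldsymbol{g}$ in the POMDP. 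This follows by induction on $t$: the initializations match because $S_0\sim\mu$, $Z_0\sim\eta$, and $b_{-1}$ is fixed in both descriptions; the inductive step uses exactly the transition structure verified in the proof of Lemma \ref{lem:pomdp}, together with the observational equivalence of the policy mappings from Lemma \ref{lem:strategy}. Combining this with the definition $r_t(x_{t+1}) = \rho^s(s_i, a_i)$ for $i=t$ and $r_{T+1}\equiv 0$, we get
\begin{align}
J(\boldsymbol{g}) \;=\; \mathbb{E}^{\boldsymbol{g}}\Bigl[\sum_{t=0}^{T} r_t(X_{t+1})\Bigr] \;=\; \mathbb{E}^{\boldsymbol{\pi}}\Bigl[\sum_{i=0}^{T} \rho^s(S_i, A_i)\Bigr] \;=\; J(\boldsymbol{\pi}).
\end{align}

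With domain equivalence and value equivalence in hand, the argmax sets coincide: any $\boldsymbol{g}^* \in \arg\max_{\boldsymbol{g} \in \mathcal{G}} J(\boldsymbol{g})$ yields $\boldsymbol{\pi}^* = \Phi^{-1}(\boldsymbol{g}^*) \in \arg\max_{\boldsymbol{\pi} \in \boldsymbol{\Pi}} J(\boldsymbol{\pi})$, which is precisely a solution to Problem \ref{prb:meta-policy}. The main technical obstacle is the measure-theoretic bookkeeping in the inductive step above, since the state, observation, and information-vector spaces grow with $t$ and are written as disjoint unions; care is required to make sure the reward indexing $i=t$ and the absence of a round-$T+1$ receiver action are handled consistently, and that the confounder $Z$, although never observed by the agent, plays the same role in both joint laws via the belief-update kernel.
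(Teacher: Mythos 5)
Your proposal follows essentially the same route as the paper's proof: Lemma \ref{lem:pomdp} for the existence of the POMDP, Lemma \ref{lem:strategy} for the identification of $\boldsymbol{\Pi}$ with $\mathcal{G}$, and the matching of the reward $r_t(x_{t+1})=\rho^s(s_i,a_i)$ to conclude that Problems \ref{prb:meta-policy} and \ref{prb:pomdp} are equivalent. The only difference is one of rigor rather than route: you make explicit the inductive push-forward argument showing the induced joint laws, and hence the two performance functionals, coincide under the strategy bijection, a step the paper leaves implicit in its one-line appeal to the reward construction.
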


\begin{proof}
We reformulate the sequential persuasion process as a system evolving over discrete time steps. 
We define the notions of state, observation, and action that are consistent with a general POMDP framework. 
Furthermore, in Lemma \ref{lem:pomdp}, we show that the constructed system is indeed a valid POMDP.
In Lemma \ref{lem:strategy}, we establish that the set of observation-based control strategies $\mathcal{G}$ coincides with $\boldsymbol{\Pi}$, the set of meta-policies considered.
Next, we construct a reward function for the planner in the POMDP framework to match that of the sender in the persuasion framework.
Thus, the solution to Problem \ref{prb:pomdp} is a valid solution to Problem \ref{prb:meta-policy} and Problem \ref{prb:pomdp} is equivalent to Problem \ref{prb:meta-policy}.       
\end{proof}

Note that to formulate and solve Problem \ref{lem:pomdp}, we need to be able to compute the performance of any observation-based control strategy.
At each $t$, let $\tau_t= (x_{0:t},y_{0:t},u_{0:t})$ be the trajectory of the system. 
We denote the space of all trajectories at time $t$ by $\mathcal{T}_t$.
The observable trajectory of the system is denoted by $\tau^{o}_t= (y_{0:t},u_{0:t})$.
The space of all observable trajectories at time $t$ is $\mathcal{T}^o_t$. 
Any strategy $\boldsymbol{g}$ induces a distribution $p^{\boldsymbol{g}}_t(\cdot)$ over $\mathcal{T}_t$.
Similarly, it induces a distribution $p^{\boldsymbol{g},o}_t(\cdot)$ over $\mathcal{T}^o_t$.
In our analysis to compute the performance of any strategy,  we make use of the conditional independencies among variables in the POMDP based on their causal relationship. 
The causal relationships in the formulated POMDP are given by the causal graph illustrated in Fig. \ref{fig:POMDP_Causal_graph}. 


\begin{figure}[t]
    \centering    \includegraphics[width=0.8\linewidth]{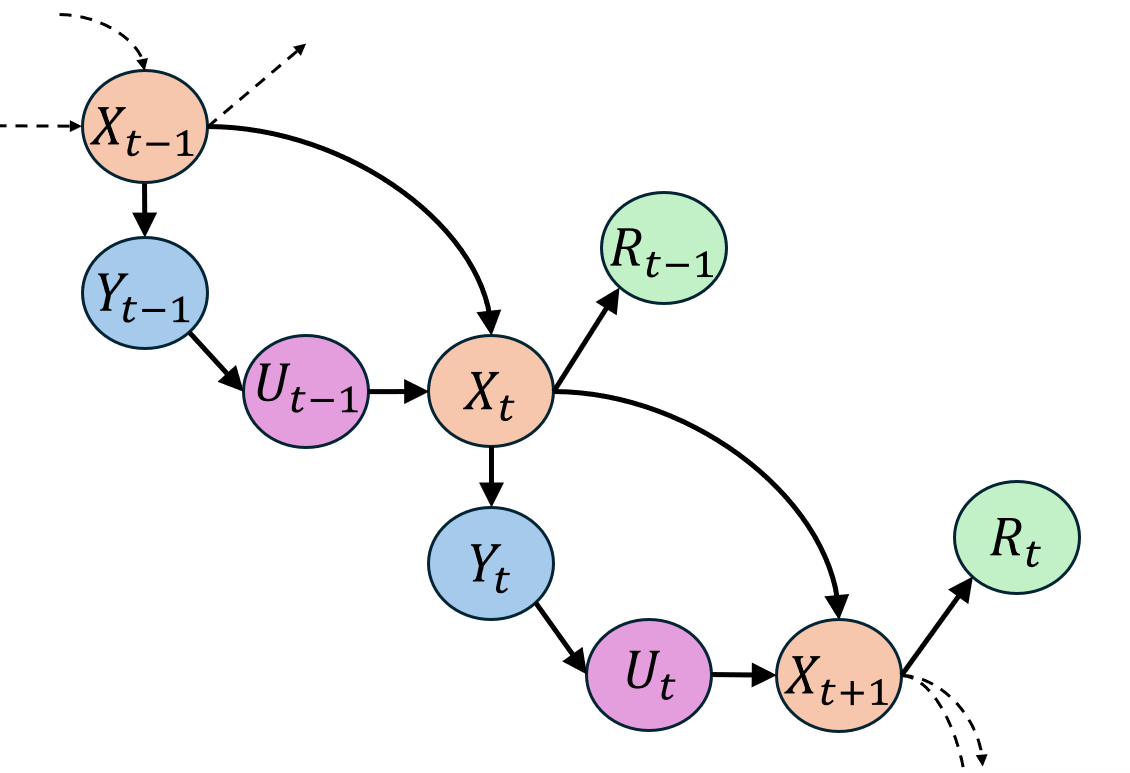}
    \caption{Causal graph of POMDP.}
    \label{fig:POMDP_Causal_graph}
        \vspace{-15pt}
\end{figure}

In this POMDP, the unobserved state $x_t$ plays the role of an \textit{unobserved confounder} between the action $u_t$ and the next state $x_{t+1}$.
Since the exact dynamics of the POMDP are unknown and an unobserved confounder exists, we do not have access to $p^{\boldsymbol{g}}_i(\cdot)$ or $p^{\boldsymbol{g},o}_t(\cdot)$.
Consequently, we can only hope to compute $J(\boldsymbol{g})$ based on a logged dataset.
To this end, we consider that we have access to a dataset $\boldsymbol{D}^b=\{\tau^{o,k}_{T+1}\}_{k=0}^n$, which is a collection of \textit{n}-observable trajectories, generated by an unknown behavioral control strategy $\boldsymbol{g}^b$.
Typically, OPE techniques are used to compute the performance of any given strategy based on $\boldsymbol{D}^b$.
However, the unobserved confounder, by its nature, poses a challenge to OPE. In the next section, we analyze how unobserved confounding in this context is handled.


\section{Off-Policy Evaluation}\label{sec:OPE}

In this section, we show how to compute the performance of any evaluation strategy $\boldsymbol{g}^e \in \mathcal{G}$ to facilitate the formulation of Problem \ref{lem:pomdp}.
Firstly, we introduce the notation that we use in our analysis.

\subsection{Notations}
In this subsection, we introduce an example of vector and matrix notation for probability distributions, which we use in our analysis.
At each $t$, consider any state realization $x_t \in \mathcal{X}$ and next state realization $x_{t+1} \in \mathcal{X}$. 
For some action $u_t$, the transition probability between any two state realizations $x_t$ and $x_{t+1}$, is denoted by $p(x_{t+1}|x_t,u_t)$.
In our notation, $P(X_{t+1}|x_t,u_t)$ is a column vector of dimension $|\mathcal{X}|\times 1$ given by $P(X_{t+1}|x_t,u_t)= (p(x^1_{t+1}|x_t,u_t),\ldots,p(x^{|\mathcal{X}|}_{t+1}|x_t,u_t))$, where $x^k_{t+1}$ is $k$-th possible realization of $X_{t+1}$. 
Similarly, $P(x_{t+1}|X_t,u_t)$ is a row vector of dimension $1 \times|\mathcal{X}|$ given by $P(x_{t+1}|X_t,a_t)= (p(x_{t+1}|x^1_t,u_t),\ldots,p(x_{t+1}|x^{|\mathcal{X}|}_t,u_t))$. 
In our analysis, the multiplication of a pair of vectors of appropriate dimension is their \textit{scalar product}. 
The notation $P(X_{t+1}|X_t,U_t)$ indicates a $|\mathcal{X}| \times |\mathcal{X}|$ matrix.
When terms in the conditioning are not of the same dimension, this would be the appropriate rectangular matrix.
In our analysis, the multiplication of two matrices refers to their \textit{algebraic matrix multiplication}.

Our analysis also makes use of the conditional independencies among variables in the POMDP, based on their causal relationship illustrated in Fig. \ref{fig:POMDP_Causal_graph}. 
As an example, the causal relationship corresponding to the Markovian evolution of the state given by $p(x_{t+1}|x_{0:i},u_{0:i})= p(x_{t+1}|x_t,u_t)$, is denoted by $x_{t+1} \perp\!\!\!\!\perp (x_{0:i-1},u_{0:i-1}) | (x_t,u_t)$.

\subsection{Evaluation using proximal learning}

We begin by highlighting the dependence of the performance of any strategy on the probability measure it induces on the trajectory space of the POMDP.
We expand the expression for performance $J(\boldsymbol{g}^e)$ which is given by
\begin{align}
    J(\boldsymbol{g}^e)= \mathbb{E}^{\boldsymbol{g}^e} \left[\sum_{t=0}^T r_t
    (X_{t+1})\right]=\sum_{t=0}^{T}\sum_{r^s_t\in \mathcal{R}^s} r^s_t \cdot p^{\boldsymbol{e}}(r^s_t),\label{r_pover_r_conversion}
\end{align}
where $p^{\boldsymbol{e}}(r^s_t)$ is the probability of a reward realization $r^s_t$ at time $t$ under the strategy $\boldsymbol{g}^e$.
We consider $p^e(r^s_t)$ and use the law of total probability and Bayes' theorem to get
\vspace{-2pt}
\begin{align}
p^{\boldsymbol{e}}(r^s_t)=\sum_{\tau_{t+1}}\;p^{\boldsymbol{e}}(r^s_t|x_{t+1})\;p^{\boldsymbol{e}}(\tau_{t+1}),\label{p_e_r_ierm2}
\end{align}
\vspace{-2pt}
where $\sum_{\tau_{t+1}}$ indicates summation over all possible trajectories $\tau_{t+1} \in \mathcal{T}_{t+1}$ and we have use the conditional independence $r^s_{t} \perp\!\!\!\!\perp \tau_{t} | x_{t+1}$.
Furthermore, we note that knowledge of $p^{\boldsymbol{e}}(\tau_{t+1})$, will allow us to compute $p^{\boldsymbol{e}}(r^s_t)$ at each $t$, and, in turn, the performance.

In the following analysis, we demonstrate the proximal learning method for sequential decision-making problems presented in \cite{tennenholtz2020off} for the reformulated POMDP.
For the exposition, we consider the following assumptions. 

\begin{assumption}\label{assm_extraobs}
We assume that we have access to an observation before the initial time $t=0$, denoted by $y_{-1}$, which can be used as one of the proxies. 
\end{assumption}
Recall that in the persuasion process, at the beginning, the sender considers a uniform prior to the receiver's belief. This uniform prior can be considered as observation $y_{-1}$. 

\begin{assumption}\label{assm_tnvrtble}
At each $t=1,\ldots,T$, the matrices $P^{\boldsymbol{b}}(Y_t \mid Y_{t-1},u_t)$ and $P^{\boldsymbol{b}}(X_t \mid Y_{t-1}, u_t)$ are invertible.    
\end{assumption}

This assumption ensures that the observation at different time steps carries over enough information about the state.

At each time $t=1, \ldots,T$, for any observable trajectory $\tau^o_t \in \mathcal{T}^0_t$, we define the weight matrix $W_t(\tau^o_t)$ given by
\begin{align}
    W_t(\tau^o_t)= P^{\boldsymbol{b}}(Y_t|Y_{t-1},u_t)^{-1}\cdot P^{\boldsymbol{b}}(Y_t,y_{t-1}|Y_{t-2},u_{t-1}).
\end{align}
For time $t=0$, we define the weight matrix $W_0(\tau^0_0)$ for any $\tau^o_0 \in \mathcal{T}^0_0$ as $W_0(\tau^0_0)=P^{\boldsymbol{b}}(Y_0|u_0,Y_{-1})^{-1}\cdot P^{\boldsymbol{b}}(Y_0)$. 

\begin{theorem}\label{Theorem_reward_distrib}
Under Assumption \ref{assm_extraobs} and Assumption \ref{assm_tnvrtble}, at each $t$, 
the reward distribution under any evaluation strategy $\boldsymbol{g}^e$ can be computed based on $\mathcal{D}^{\boldsymbol{b}}$ and is given by
\begin{align}
    \nonumber &P^{\boldsymbol{e}}(r_t)=\\
    &\sum_{\tau^o_{t+1}} \Pi_{k=0}^{t+1}\;p^{\boldsymbol{e}}(u_k|y_k)\; P^{\boldsymbol{b}}(r_t,y_{t+1}|u_{t+1},Y_{t})\;
    \Pi_{k=0}^{t+1} W(\tau^o_k).
\end{align}
\end{theorem}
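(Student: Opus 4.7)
The plan is to specialize the proximal off-policy evaluation result of \cite{tennenholtz2020off} to the POMDP constructed in Section \ref{sec:pomdp}. I would begin from the decomposition already noted by the authors, namely $p^{\boldsymbol{e}}(r_t^s)=\sum_{\tau_{t+1}} p^{\boldsymbol{e}}(r_t^s \mid x_{t+1})\,p^{\boldsymbol{e}}(\tau_{t+1})$. Since the evaluation and behavioral strategies differ only in the action rule $p(u_k \mid y_k)$, the conditional independencies read off the causal graph in Fig.\ \ref{fig:POMDP_Causal_graph} permit the factorization $p^{\boldsymbol{e}}(\tau_{t+1}) = \bigl(\prod_{k=0}^{t+1} p^{\boldsymbol{e}}(u_k \mid y_k)\bigr)\,\phi(\tau_{t+1})$, where $\phi$ collects the policy-agnostic POMDP dynamics (initial state, emissions, and transitions). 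The $p^{\boldsymbol{e}}(u_k \mid y_k)$ factors then pull out verbatim into the target formula, leaving only $\phi$ to be identified from $\mathcal{D}^{\boldsymbol{b}}$.

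The core difficulty is that $\phi$ still depends on the latent trajectory $x_{0:t+1}$, which cannot be marginalized using observational data alone because $x_t$ acts as an unobserved confounder between $u_t$ and $x_{t+1}$. The proximal bridge addresses this: Assumption \ref{assm_tnvrtble} ensures that $P^{\boldsymbol{b}}(Y_t \mid Y_{t-1}, u_t)$ is invertible, which in turn uniquely determines $W_t(\tau^o_t)$ through the linear system implicit in its definition; this weight acts as a surrogate that translates a state-conditional quantity at time $t$ into an observation-conditional quantity at time $t-1$. Assumption \ref{assm_extraobs} then supplies the initial proxy $y_{-1}$ so that the base case of this recursion is well-defined at $t=0$.

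I would then substitute the bridge identity into the expression obtained in the first step, sweeping from $k=t+1$ back to $k=0$. Each substitution eliminates one $x_k$ at the cost of an extra $W_k(\tau^o_k)$ factor together with an observation-conditioned term. Once every latent state has been exchanged in this way, the terminal-step quantity collapses to the behavioral joint $P^{\boldsymbol{b}}(r_t, y_{t+1} \mid u_{t+1}, Y_t)$, the surviving weights combine into $\prod_{k=0}^{t+1} W(\tau^o_k)$, and summing over $\tau^o_{t+1}$ produces precisely the stated formula.

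The main obstacle, I expect, is the matrix bookkeeping in this recursion: one must carefully order the matrix products and verify, using Fig.\ \ref{fig:POMDP_Causal_graph}, that the pair $(y_{t-1}, y_t)$ is a valid two-proxy separation of $x_t$ from the rest of the trajectory at every time step. While this independence is structural in a generic POMDP, it must be explicitly confirmed for our disjoint-union construction, where $x_t=(\delta_i, s_i, b_{i-1}, z_i)$ mixes observable and latent components, and where the reward emission $p(r_t^s \mid x_{t+1})$ has to be routed through the correct proxy pair. Once this independence is in hand, the remainder of the proof reduces to algebra.
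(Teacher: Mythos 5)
Your proposal follows essentially the same route as the paper's own proof in Appendix~A: the same decomposition of $p^{\boldsymbol{e}}(r^s_t)$ through $p^{\boldsymbol{e}}(\tau_{t+1})$, the same separation into the policy factors $\prod_k p^{\boldsymbol{e}}(u_k|y_k)$ and a policy-agnostic dynamics term re-expressed under the behavioral measure, and the same use of the proxy pair $(y_{t-1},y_t)$ with Assumption~\ref{assm_tnvrtble} to eliminate each latent $x_k$ in favor of the weight matrices $W_k(\tau^o_k)$, telescoping to the stated formula. The plan is correct; what remains is exactly the matrix bookkeeping you flag, which the paper carries out by pairing consecutive factors at indices $k$ and $k+1$ so that $P^{\boldsymbol{b}}(Y_k|X_k,u_k)\cdot P^{\boldsymbol{b}}(X_k,y_{k-1}|Y_{k-2},u_{k-1})$ collapses to the observable matrix $P^{\boldsymbol{b}}(Y_k,y_{k-1}|Y_{k-2},u_{k-1})$.
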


\begin{proof}
\vspace{-10pt}
The proof proceeds in two steps.
We expand the expression for the reward distribution and decompose it into two components based on dependence on the strategy.
The first component includes strategy-dependent distributions on observed variables, which can be computed given a specific evaluation strategy $\boldsymbol{g}^e$.
On the other hand, strategy-independent distributions can be over a combination of observable and unobservable variables. 
Next, we show how to compute the strategy-independent component from observational data using proximal learning.
Details of the mathematical arguments are provided in Appendix A.
\end{proof}

\section{Application Example}\label{Application}

In this section, we elaborate on a framework based on SPP to elucidate how AI can communicate effectively within the context of industrial safety management. This framework aims to influence the behavior of warehouse associates engaged in the transportation of packages through various means, including forklifts, conveyor systems, and manual handling. Packages may carry specific handling instructions, such as ``fragile," ``temperature-sensitive," or ``do not turn upside down."
The primary objective for associates is to maximize operational efficiency to enhance their incentives, while safety managers must ensure both package throughput and the safety of workers. The daily demand and the inherent characteristics of the packages define the system’s current state. 
To facilitate safe practices, safety managers communicate with associates via personal devices, dispatching signals that range from ``extremely cautious" to ``at ease." These signals are critical as they guide associates in updating their beliefs before decision-making. For instance, on high-demand days, cautionary signals are prevalent, whereas ``at ease” messages are more common during periods of lower demand. It is essential to recognize that individual risk tolerance influences how associates interpret these signals; a risk-averse associate may equate a cautious signal with a high-demand scenario, while a risk-seeking associate might undervalue caution even in similar contexts. This variability underscores the importance of considering past interactions and rewards, which can significantly shape the interpretation and effectiveness of safety communication strategies.

\section{Conclusion}
\label{sec:conclusion}

In this paper, we developed a framework to study an SPP that incorporates unobserved confounding affecting the belief updates of the receiver. We formulated an equivalent POMDP model, where the optimal observation-based control strategy corresponds to the optimal meta-policy for the SPP.
Furthermore, we demonstrate off-policy evaluation using proximal learning to showcase policy evaluation under unobserved confounding. We provide an industrial safety- management example to illustrate practical applications.

Future works could consider enhancing computational tractability for implementation, integrating different OPE techniques (sensitivity analysis \cite{kallus2020confounding}, bridge functions \cite{bennett2024proximal}), and extend this framework to continuous spaces to be applicable to complex human decision-making tasks.
 
\section*{Acknowledgments}
The authors would like to thank Dr. Aditya Dave for valuable discussions on the formulation and solution approach.

\bibliographystyle{ieeetr}
\bibliography{References, Latest_IDS}

\section*{Appendix A - Proof of Theorem 2}

The essence of proximal learning is to find \textit{proxies} to infer the average influence of the unobserved confounder on the performance of a strategy.
Firstly, we present a result that provides a method for computing the distribution $p^{\boldsymbol{e}}(\cdot)$ on the reward $r$ at each time $t$ as a function of just the observable trajectory $\tau^o_t$.
We present a detailed proof of Theorem \ref{Theorem_reward_distrib} and show how past and current observations of the state can be used as proxies at each $t$, to infer the performance of an evaluation strategy.

We begin by expanding the strategy dependent distribution $p^{\boldsymbol{e}}(\tau_{t+1})$ for any $\tau_t=(u_{0:t},y_{0:t},x_{0:t})$ in \eqref{p_e_r_ierm2} as
\begin{align}
    &\nonumber p^{\boldsymbol{e}}(\tau_{t+1})\\
    =&\;p^{\boldsymbol{e}}(u_{t+1}|y_{t+1},x_{t+1},\tau_t)\;p^{\boldsymbol{e}}(y_{t+1},x_{t+1},\tau_t),\\
    =&\;p^{\boldsymbol{e}}(u_{t+1}|y_{t+1})\;p^{\boldsymbol{e}}(y_{t+1}|x_{t+1},\tau_t)\;p^{\boldsymbol{e}}(x_{t+1},\tau_t),\label{p^e_tau_action}\\
    =&\;p^{\boldsymbol{e}}(u_{t+1}|y_{t+1})\;p^{\boldsymbol{e}}(y_{t+1}|x_{t+1})\;p^{\boldsymbol{e}}(x_{t+1}|\tau_t)\;p^{\boldsymbol{e}}(\tau_t),\label{p^e_tau_obs}\\
    =&\;p^{\boldsymbol{e}}(u_{t+1}|y_{t+1})\;p^{\boldsymbol{e}}(y_{t+1}|x_{t+1})\;p^{\boldsymbol{e}}(x_{t+1}|x_t,u_t)\;p^{\boldsymbol{e}}(\tau_t),\label{p^e_tau_state}\\
    \nonumber=&\;\Pi_{k=0}^{t+1}\;p^{\boldsymbol{e}}(u_k|y_k)\;
    \Pi_{k=0}^{t+1}\; p^{\boldsymbol{e}}(y_k|x_k)\;\\
    &\hspace{60pt} \cdot\Pi_{k=0}^{t}\;p^{\boldsymbol{e}}(x_{k+1}|x_k,u_k)\;p^{\boldsymbol{e}}(x_{0}),\label{p_e_r_ierm2_exp} 
\end{align}
where we use the fact that the control strategy is observation-based in \eqref{p^e_tau_action}.  The transition to \eqref{p^e_tau_obs}, is based on how the observation $y_{t+1} \perp\!\!\!\!\perp \tau_t |x_{t+1}$. Furthermore, the Markovian evolution of the state is used to obtain \eqref{p^e_tau_state}.

We substitute the expanded joint probability in \eqref{p_e_r_ierm2} to obtain
\begin{align}
&\nonumber p^{\boldsymbol{e}}(r^s_t)\\
\nonumber &\;= \sum_{\tau_{t+1}}\; p^{\boldsymbol{e}}(r^s_t|x_{t+1})\;\Pi_{k=0}^{t+1}\;p^{\boldsymbol{e}}(u_k|y_k)\;\Pi_{k=0}^{t+1}\;p^{\boldsymbol{e}}(y_k|x_k)\;\\
&\hspace{60pt}\cdot\Pi_{k=0}^{t}\;p^{\boldsymbol{e}}(x_{k+1}|x_k,u_k)\;p^{\boldsymbol{e}}(x_{0}),\\
\nonumber&\;=\sum_{\tau_{t+1}}\;\Pi_{k=0}^{t+1}\;p^{\boldsymbol{e}}(u_k|y_k)\;\;p^{\boldsymbol{b}}(r^s_t|x_{t+1})\;\Pi_{k=0}^{t+1}\; p^{\boldsymbol{b}}(y_k|x_k)\;\\
&\hspace{60pt} \cdot \Pi_{k=0}^{t}\;p^{\boldsymbol{b}}(x_{k+1}|x_k,u_k)\;p^{\boldsymbol{b}}(x_{0}), \label{shift_super_b_io_e}
\end{align}
where, in \eqref{shift_super_b_io_e} we change the superscript on all the terms except 
$p^{\boldsymbol{e}}(u_t|y_t)$.
It is essential to note that this strategy-dependent term can be computed based on the observation.
The reward $r^s_t$ is dependent only on $(x_{t+1})$, hence, independent of the agent's strategy. 
Furthermore, the evolution of the state of the system is Markovian, so the transition term is also independent of the strategy.
Consequently, we can compute all terms independent of the strategy from a dataset generated by any control strategy.
Given that we have a dataset generated by the behavioral strategy $\boldsymbol{g}^b$, we change the superscripts to indicate that we compute these terms from the behavioral data $\boldsymbol{D}^b$.

At each $t$, since the observation $y_t$ is independent of the action $u_t$ given the state $x_t$ under measure $p^{\boldsymbol{b}}$, we rephrase the expression in \eqref{shift_super_b_io_e} as
\begin{align}
\nonumber&\nonumber p^e(r^s_t)\;= \sum_{\tau_{t+1}}\;\Pi_{k=0}^{t+1}\;p^{\boldsymbol{e}}(u_k|y_k)\;p^{\boldsymbol{b}}(r^s_t|x_{t+1})\;p^{\boldsymbol{b}}(y_{t+1}|x_{t+1})\;\\
&\hspace{60pt}\cdot\Pi_{k=0}^{t}\;p^{\boldsymbol{b}}(x_{k+1},y_k|x_k,u_k)\;p^{\boldsymbol{b}}(x_{0}). 
\end{align}
Furthermore, the reward $r^s_t$ is independent of the observation $y_{t+1}$ at time $t+1$, given the tuple $(x_{t+1},u_{t+1})$.
We use this condition to combine the remaining terms to derive
\begin{align}
&\nonumber p^e(r^s_t)\;=\;\sum_{\tau_{t+1}}\;\Pi_{k=0}^{t+1}\;p^{\boldsymbol{e}}(u_k|y_k)\;p^{\boldsymbol{b}}(r^s_t,y_{t+1}|x_{t+1},u_{t+1})\;\\
&\hspace{60pt} \cdot\Pi_{k=0}^{t}\;p^{\boldsymbol{b}}(x_{k+1},y_k|x_k,u_k)\;p^{\boldsymbol{b}}(x_{0}). \label{p^e(r^s_i)_scalar}
\end{align}

We use the matrix and vector notation for the unobserved state of the POMDP and retain the summation over $\tau^o_{i+1}$, which is the observable component of $\tau_{t+1}$. This gives us the following expression:
\begin{align}
\nonumber& p^e(r^s_t)\;=\sum_{\tau^o_{i+1}}\;\Pi_{k=0}^{t+1}\;p^{\boldsymbol{e}}(u_k|y_k)\;p^{\boldsymbol{b}}(r^s_t,y_{t+1}|X_{t+1},u_{t+1})\;\\
&\hspace{60pt}\cdot \Pi_{k=0}^{t}\;p^{\boldsymbol{b}}(X_{k+1},y_k|X_k,u_k)\;p^{\boldsymbol{b}}(X_{0}). \label{p^e(r^s_i)_vector}
\end{align}


We now discuss how terms in \eqref{p^e(r^s_i)_vector}, which involve the unobserved state, can be inferred from proxies.
In our subsequent analysis, we show that for each $t$, the pair $(y_t,y_{t-1})$ are valid proxies to $x_t$. 
First, we consider the following expression:
\begin{align}
    \nonumber&P^b({X}_{t+1},{y}_t |{Y}_{t-1},{U}_t)=\\
    &\hspace{30pt}P^{b}({X}_{t+1},{y}_t|{X}_t,{Y}_{t-1},{u}_t) \cdot P^{b}({X}_t|Y_{t-1},{u}_t),
\end{align}
where we incorporate $X_t$ into the right-hand side based on the law of total probability and Bayes theorem.
By Markovian evolution of state and the state-observation relation, $(x_{t+1},y_t) \perp\!\!\!\!\perp y_{t-1}| (x_t,u_t)$. Hence, this expression reduces to the following:
\begin{align}
    \nonumber&P^b({X}_{t+1},{y}_t |{Y}_{t-1},{u}_t)=\\
    &\hspace{30pt}P^{b}({X}_{t+1},{y}_t|{X}_t,{u}_t) \cdot P^{b}({X}_t|Y_{t-1},{u}_t).
\end{align}
With Assumption \ref{assm_tnvrtble} the matrix $P^{b}({X}_t|Y_{t-1},{u}_t)$ is invertible, which results in the following equation:
\begin{align}
    \nonumber&P^{b}({X}_{t+1},{y}_t|{X}_t,{u}_t)=\\
    &\hspace{30pt}P^b({X}_{t+1},{y}_t |{Y}_{t-1},{u}_t)\cdot P^{b}({X}_t|Y_{t-1},{u}_t)^{-1},
    \label{after_proxy_y_t-1}
\end{align}
where, under the assumption, we have incorporated the first proxy $y_{t-1}$ for $x_t$.
To incorporate the second proxy, we consider the following matrix expression:
\begin{align}
P^b({Y}_t|{Y}_{t-1},{u}_t )=P^b({Y}_t|X_t,{Y}_{t-1},{u}_t )\cdot P^b({X}_t|{Y}_{t-1},{u}_t ),
\end{align}
where we have again used the law of total probability and Bayes theorem to introduce $X_t$ into the expression.
With Assumption \ref{assm_tnvrtble} and condition $y_t \perp\!\!\!\!\perp y_{t-1}| (x_t,u_t)$, we obtain
\begin{align}
    P^{b}({X}_t|Y_{t-1},{u}_t)= P^b({Y}_t|{X}_{t},{u}_t )^{-1} \cdot P^b({Y}_t|{Y}_{t-1},{u}_t ).
\end{align}
We substitute the expression for $P^{b}({X}_t|Y_{t-1},{u}_t)$ back into \eqref{after_proxy_y_t-1} to incorporate the second proxy $y_t$ to achieve
\begin{align}
    \nonumber&P^{b}({X}_{t+1},{y}_t|{X}_t,{u}_t)=P^b({X}_{t+1},{y}_t |{Y}_{t-1},{u}_t)\\
    &\hspace{50pt}\cdot P^b({Y}_t|{Y}_{t-1},{u}_t )^{-1}\cdot P^b({Y}_t|{X}_{t},{u}_t ).
    \label{after_proxy_y_t1}
\end{align}
Similarly, we can recast the vector $p^{\boldsymbol{b}}(r^s_t,y_{t+1}|X_{t+1},u_{t+1})$ as follows:
\begin{align}
    \nonumber&P^{b}(r^s_t,y_{t+1}|{X}_{t+1},{u}_{t+1})= P^b(r^s_t,y_{t+1}|{Y}_{t-1},{u}_t)\\
    &\hspace{50pt}\cdot P^b({Y}_t|{Y}_{t-1},{u}_t )^{-1}\cdot P^b({Y}_t|{X}_{t},{u}_t ).
    \label{after_proxy_y_t2}
\end{align}
In \eqref{p^e(r^s_i)_vector}, we analyze the product of two such terms at indexes $k$ and $k+1$ :
\begin{align}
    \nonumber&P^{b}({X}_{k+1},{y}_k|{X}_k,{u}_k) \cdot P^{b}({X}_{k},{y}_{k-1}|{X}_{k-1},{u}_{k-1})=\\
    \nonumber& P^b({X}_{k+1},{y}_k | {Y}_{k-1},{u}_k)\cdot P^b({Y}_k| {Y}_{k-1},{u}_k ) ^{-1} \\
    \nonumber&\hspace{10pt}\cdot P^b({Y}_k| {X}_{k},{u}_k )\cdot P^b({X}_{k},{y}_{k-1} | {Y}_{k-2},{u}_{k-1})\\
    &\hspace{30pt}\cdot P^b({Y}_{k-1}| {Y}_{k-2},{u}_{k-1} ) ^{-1} \cdot P^b({Y}_{k-1}| {X}_{k-1},{u}_{k-1} ).
\end{align}

In this expression, we segregate the matrix product $P^b({Y}_k| {X}_{k},{u}_k )\cdot P^b({X}_{k},{y}_{k-1} | {Y}_{k-2},{u}_{k-1})$.
We consider the following matrix expression:
\begin{align}
&\nonumber P^{b}({Y}_{k},y_{k-1}|Y_{k-2},u_{k-1})\\
&=P^b({Y}_k| {X}_{k},y_{k-1},Y_{k-2},{u}_k )\cdot P^b({X}_{k},{y}_{k-1} | {Y}_{k-2},{u}_{k-1}),\\
&=P^b({Y}_k| {X}_{k},{u}_k )\cdot P^b({X}_{k},{y}_{k-1} | {Y}_{k-2},{u}_{k-1}),
\end{align}
where, in the second equation, we use the state and observation relation to get $y_k \perp\!\!\!\!\perp (y_{k-1},y_{k-2})| (x_k,u_k) $ and reduce the expression to
\begin{align}
\nonumber &P^{b}({Y}_{k},y_{k-1}|Y_{k-2},u_{k-1})
\\
&=P^b({Y}_k| {X}_{k},{u}_k )\cdot P^b({X}_{k},{y}_{k-1} | {Y}_{k-2},{u}_{k-1}).
\end{align}
As a result, the segregated term $P^b({Y}_k| {X}_{k},{u}_k )\cdot P^b({F}_{k},{y}_{k-1} | {Y}_{k-2},{u}_{k-1})$ can now be re-written without the unobserved state as 
\begin{align}
    \nonumber P^b({Y}_k| {X}_{k},{u}_k )\cdot P^b({X}_{k},{y}_{k-1} | {Y}_{k-2},{u}_{k-1})\\
    =P^{b}({Y}_{k},y_{k-1}|Y_{k-2},u_{k-1}),\label{weight_matrices}
\end{align}
where the right-hand side of this equation is the weight matrix $W_k(\tau^o_k)$. 
This shows that the consecutive multiplication of terms at indexes $k$ and $k+1$ is equivalent to the multiplication of weight matrices.
Substituting the weight matrix in \eqref{weight_matrices} into the expression for reward distribution in \eqref{p^e(r^s_i)_vector} will give us the result presented in Theorem \ref{Theorem_reward_distrib}.

\section*{Appendix B - Discussions}

\noindent \textbf{Finite Belief Space:}
In the context of human-AI recommendation systems, the AI communicates with a human who serves as the receiver. Typically, humans interpret and process information from these recommendations or signals in categorical terms rather than as nuanced probability distributions. For instance, an individual’s belief about the likelihood of rain on a given day may range from low to medium to high, reflecting their level of confidence. Additionally, due to computational constraints, a receiver may simplify their belief space by utilizing a finite set of categories, thus reducing the complexity involved in updating their beliefs to make self-interested decisions.

\noindent \textbf{Information vector:}
In this work, we have considered that the receiver's reward realizations are observable by the sender. 
Consequently, the information vector includes $(\rho^r_{0:i-1})$.
Based on this definition, we define the corresponding notion of state for the POMDP.
However, in settings where the receiver's reward need not be observable to the sender, the information vector must be redefined to exclude this information.
In such cases, the notion of state can be adapted accordingly, such that the Markovian dynamics of the system is preserved.

\vspace{15pt}

\noindent \textbf{Invertibility of statistically computed matrices:} \hspace{30pt}
The assumption that the matrices $P^{\boldsymbol{b}}(Y_t \mid Y_{t-1},u_t)$ and $P^{\boldsymbol{b}}(X_t \mid Y_{t-1}, u_t)$ are invertible may not hold when they are computed from data.
The quality of available offline data plays a crucial role to determine the validity of this assumption.
However, as shown in \cite{tennenholtz2020off}, the invertibility of these matrices holds without explicitly accounting for their empirical estimation.
This motivates the consideration of a set of matrices that are both invertible and sufficiently close to the empirical estimates.
In this case, a robust estimate of off-policy evaluation could be useful, which can serve as an avenue for future research.

\end{document}